\newcommand{\revise}[1]{\textcolor{black}{#1}}
\newcommand{\sectref}[1]{Section~\ref{#1}}
\newcommand{\figref}[1]{Figure~\ref{#1}}
\newcommand{\tabref}[1]{Table~\ref{#1}}
\newcommand{\egref}[1]{Example~\ref{#1}}
\newcommand{\eqnref}[1]{Equation~\ref{#1}}
\newcommand{\thmref}[1]{Theorem~\ref{#1}}
\newcommand{\lemref}[1]{Lemma~\ref{#1}}
\newcommand{\agref}[1]{Algorithm~\ref{#1}}
\newcommand{\eqneqnref}[2]{Equation~\ref{#1} and \ref{#2}}
\newtheorem{lemma}{Lemma}
\newtheorem{theorem}{Theorem}
\newcommand{\uitstart}[1]{{\noindent \it \underline{#1}}}
\newcommand{\startpara}[1]{{\vskip5pt\noindent{\bf #1.}}} % custom paragraph header
\renewcommand{\url}[1]{{\def~{\char126}\sf#1}}
\newcounter{exampcount}
\newenvironment{examp}
{\refstepcounter{exampcount}
\vskip6pt\noindent
{\bf Example \arabic{exampcount}.}}
{\hfill$\blacksquare$\vskip6pt}
\DeclareMathOperator*{\argmax}{arg\,max}
\def\Rset{\mathbb{R}}
\def\cT{{\mathcal{T}}}
\def\cG{{\mathcal{G}}}
\def\Pr{{\mathit{Pr}}}
\newcommand\post[2]{\mathsf{post}({#1},{#2})}
\def\obs{{\mathsf{obs}}}
\def\supp{{\mathit{supp}}}
\def\post{{\mathrm{post}}}
\def\indicator{\mathbb{I}}
\def\nset{{\mathcal{N}}}
\def\vset{\mathcal{V}}
\def\pset{\mathcal{B}}
\newcommand{\D}{\mathcal{D}}
\newcommand{\M}{\mathcal{M}} 
\renewcommand{\O}{\mathcal{O}}
\title{\LARGE \bf Safe POMDP Online Planning among Dynamic Agents \\ via Adaptive Conformal Prediction}
\author{Shili Sheng$^{1}$, Pian Yu$^{2}$,  David Parker$^{2}$, Marta Kwiatkowska$^{2}$, and Lu Feng$^{1}$% <-this % stops a space
\thanks{*This work was supported in part by NSF grants CCF-1942836 and CCF-2131511, and the ERC ADG FUN2MODEL (Grant agreement ID: 834115). }% <-this % stops a space
\thanks{$^{1}$Shili Sheng and Lu Feng are with the School of Engineering and Applied Science, University of Virginia, Charlottesville, VA 22904, USA {\tt\small  \{ss7dr, lu.feng\}@virginia.edu}}%
\thanks{$^{2}$Pian Yu, David Parker, and Marta Kwiatkowska are with the Department of Computer Science, University of Oxford, Parks Road, Oxford OX1 3QD, United Kingdom {\tt\small \{pian.yu, david.parker, marta.kwiatkowska\}@cs.ox.ac.uk}}%
}
\begin{document}
\maketitle
% \thispagestyle{empty}
% \pagestyle{empty}

%===========================================================================
\begin{abstract}
Online planning for partially observable Markov decision processes (POMDPs) provides efficient techniques for robot decision-making under uncertainty. However, existing methods fall short of preventing safety violations in dynamic environments. This work presents a novel safe POMDP online planning approach that maximizes expected returns while providing probabilistic safety guarantees amidst environments populated by multiple dynamic agents. Our approach utilizes data-driven trajectory prediction models of dynamic agents and applies Adaptive Conformal Prediction (ACP) to quantify the uncertainties in these predictions. Leveraging the obtained ACP-based trajectory predictions, our approach constructs safety shields on-the-fly to prevent unsafe actions within POMDP online planning. Through experimental evaluation in various dynamic environments using real-world pedestrian trajectory data, the proposed approach has been shown to effectively maintain probabilistic safety guarantees while accommodating up to hundreds of dynamic agents.
\end{abstract}

\section{Introduction} \label{sec:intro} %1p

The \emph{partially observable Markov decision process} (POMDP) framework is a general model for decision making under uncertainty~\cite{lauri2022partially}, which finds application in various robotic tasks, such as autonomous driving~\cite{sheng2022planning} and human-robot collaboration~\cite{yu2024trust}. 
Significant progress in POMDP online planning, which interleaves policy computation and execution, has been made to overcome computational challenges. 
For instance, the widely adopted Partially Observable Monte Carlo Planning (POMCP) algorithm~\cite{silver2010monte} enhances scalability through Monte Carlo sampling and simulation.

For many safety-critical robotic applications, computing POMDP policies that satisfy safety requirements is crucial. 
Existing methods for safe POMDP online planning often represent safety requirements as cost or chance constraints, aiming to maximize expected returns while reducing cumulative costs or failure probabilities~\cite{lee2018monte,khonji2019approximability}. 
However, these methods cannot guarantee complete prevention of safety violations.
In our previous work~\cite{sheng2024safe}, we integrated POMCP with safety shields to ensure that, with probability one, goal states are reached and unsafe states are avoided. 
But these shielding methods are limited to static obstacles and fall short in dynamic environments.

To tackle this limitation, in this work we investigate safe POMDP online planning for a robotic agent travelling among multiple unknown dynamic agents, such as pedestrians or other robots. 
We consider a safety constraint, which specifies that the minimum distance from the robotic agent to any of the dynamic agents should exceed a predefined safety buffer. 
The goal is to develop a safe POMDP online planning method that computes an optimal policy maximizing the expected return while ensuring that the probability of satisfying the safety constraint exceeds a certain threshold.

This work addresses several key challenges. 
The first is the modeling of dynamic agents. We use data-driven trajectory models to predict the movements of these agents and apply Adaptive Conformal Prediction (ACP) to quantify the uncertainties in these predictions, as per~\cite{dixit2023adaptive}.
The second challenge involves the construction of safety shields that avert collisions with dynamic agents. 
To this end, we propose a novel algorithm that dynamically constructs safety shields to accommodate the ACP-based prediction regions of dynamic agents.
The third challenge is integrating safety shields into POMDP online planning. 
We enhance the POMCP algorithm with safety shields by evaluating the safety of each action during the Monte Carlo sampling and simulation process.

To the best of our knowledge, this is the first safe POMDP online planning approach that offers probabilistic safety guarantees in environments with dynamic agents.
We evaluate the proposed approach through computational experiments in various dynamic environments, utilizing real-world pedestrian trajectory data.

\subsection{Related Work} \label{sec:related} %0.5p
\startpara{Safe POMDP online planning}
Prior studies have explored different methods for incorporating safety constraints into online planning with POMDPs.
Online algorithms for constrained POMDPs apply cost (or chance) constraints to limit expected cumulative costs (or failure probability); however, they do not guarantee the avoidance of constraint violations~\cite{lee2018monte,khonji2019approximability}. 
An online method introduced in~\cite{wang2021online} synthesizes a partial conditional plan for POMDPs with a focus on safe reachability, aiming for specific probability thresholds to reach goals or avoid static obstacles.
Additionally, a rule-based shielding method presented in~\cite{mazzi2023risk} generates shields by learning parameters for expert-defined rule sets. 
In our previous work~\cite{sheng2024safe}, we devised shields to preemptively prevent unsafe actions and incorporated these shields into the POMCP algorithm to ensure safe online planning for POMDPs. 
However, these methods primarily focus on circumventing static obstacles and are not directly applicable to safe planning in dynamic environments. \revise{In \cite{kurniawati2016online}, an online solver called Adaptive Belief Tree (ABT) was proposed for POMDP planning in dynamic environments, which updates the POMDP model in response to changes in the environment. However, this work does not address how to identify these changes in the POMDP model.}

%===========================================================================
\startpara{Planning among dynamic agents}
Substantial research has been conducted in the field of robotic planning involving dynamic agents, such as pedestrians. 
Some works treat these dynamic agents as static obstacles, adapting to changes through online replanning~\cite{bauer2009autonomous}. 
Other approaches make simple assumptions about the dynamics of these agents; for example, they may assume that pedestrians move at a constant velocity~\cite{kummerle2015autonomous}. 
There are also more sophisticated methods that model the intentions of dynamic agents. 
For instance, PORCA~\cite{luo2018porca} is a POMDP-based planning method that accounts for complex models of pedestrians' intentions and interactions.
Additionally, data-driven trajectory predictors have seen extensive use in existing work (e.g., \cite{zhu2023gaussian,farid2023task}).
However, numerous data-driven prediction techniques, such as Long Short-Term Memory (LSTM), often lack mechanisms to convey uncertainty in their predictions, potentially resulting in decisions that compromise safety.
In this work, we adopt LSTM-based trajectory predictors and enhance them by incorporating uncertainty estimation through conformal prediction.

%===========================================================================
\startpara{Planning with conformal prediction}
Conformal prediction offers techniques for estimating statistically rigorous uncertainty sets for predictive models, such as neural networks, without making assumptions about the underlying distributions or models~\cite{angelopoulos2023conformal}. Adaptive Conformal Prediction (ACP) extends these techniques to estimate prediction regions for time series data~\cite{zaffran2022adaptive}. Recently, there has been a surge in integrating conformal prediction, including its adaptive variant, into planning frameworks to accommodate the uncertainty in predicted trajectories. For example, methods based on Model Predictive Control (MPC) have been developed for safe planning in dynamic environments, which incorporate (adaptive) conformal prediction regions of the predicted trajectories of dynamic agents~\cite{lindemann2023safe,dixit2023adaptive}.
Furthermore, conformal prediction has been applied to quantify the uncertainty in trajectory predictions derived from diffusion dynamics models, aiding in planning and offline reinforcement learning applications~\cite{sun2024conformal}. \revise{Recent work \cite{moss2024constrainedzero} considered chance-constrained POMDPs, and used adaptive conformal inference for estimation of failure probability thresholds.}
In the realm of large language models, conformal prediction has been leveraged to provide statistical guarantees on the completion of robot tasks, enhancing the reliability of language model-based planners~\cite{ren2023robots}.

In this work, we adopt the ACP-based trajectory predictor proposed in~\cite{dixit2023adaptive} and develop a safe POMDP online planning method that constructs shields on-the-fly incorporating ACP prediction regions of dynamic agents.

\section{Problem Formulation} \label{sec:problem} %0.75p

%===========================================================================
\startpara{POMDP model}
We model the dynamics of a robotic agent as a POMDP, denoted as a tuple $\M=(S, A, O, T, R, Z, \gamma)$,
where $S$, $A$ and $O$ are (finite) sets of states, actions, and observations, respectively;
$T: S \times A \times S \to [0,1]$ is the probabilistic transition function;
$R: S \times A \to \Rset$ is the reward function;
$Z: S \times A \times O \to [0,1]$ is the observation function; 
and $\gamma \in [0,1]$ is the discount factor.
At each timestep $t$, the state $s_t \in S$ transitions to a successor state $s_{t+1} \in S$ with probability $T(s_t, a_t, s_{t+1}) = \Pr(s_{t+1} \mid s_t,a_t)$ given an agent's action $a_t \in A$; the agent receives a reward $R(s_t, a_t)$, and makes an observation $o_{t+1} \in O$ about state $s_{t+1}$ with probability $Z(s_{t+1}, a_t, o_{t+1})=\Pr(o_{t+1} \mid s_{t+1}, a_t)$. 

Given the partial observability of POMDP states, the agent maintains a \emph{history} of actions and observations, denoted by $h_t=a_0, o_1, \dots, a_{t-1}, o_t$.
A \emph{belief state} represents the posterior probability distribution over states conditioned on the history, denoted by $b_t(s) = \Pr(s_t=s | h_t)$ for $s \in S$. 
Let $b_0$ denote the initial belief state, representing a distribution over the POMDP's initial states. 
Let $B$ be the set of belief states of POMDP $\M$. 
The \emph{belief support} of a belief state $b \in B$ is defined as $\supp(b) := \{ s  \in S | b(s) > 0 \}$, i.e., the set of states with positive belief.
% Assume there is a unique initial observation, i.e., $|{\obs(s)\mid s\in supp(b_0)}| = 1$.
The set of belief supports of POMDP $\M$ is defined as $S_B:=\{\Theta \subseteq S \mid \forall s, s' \in \Theta, \obs(s) = \obs(s')\}$, 
where $\obs: S \to 2^O$ is a function representing the set of possible observations for a state.

A POMDP \emph{policy}, denoted by $\pi: B \to A$, is a mapping from belief states to actions. 
At timestep $t$, executing a policy $\pi$ involves selecting an action $a_t = \pi(b_t)$ based on the current belief state $b_t$, and subsequently updating to belief state $b_{t+1}$ after observing $o_{t+1}$ according to Bayes' rule:
\begin{equation}
b_{t+1}(s')=\frac{Z(s', a_t, o_{t+1}) \sum_{s \in S} T(s, a_t, s')b_t(s)}{\eta(o_{t+1} \mid b,a)}
\end{equation}
where $\eta(o_{t+1} \mid b,a)$ is a normalizing constant representing the prior probability of observing $o_{t+1}$.

Let $R(b_t, a_t):= \sum_{s \in S} R(s,a_t)b_t(s)$ denote the expected immediate reward of taking action $a_t$ in belief state $b_t$. 
The expected return from following policy $\pi$ starting at initial belief state $b_0$ is defined as:
\begin{equation}
    V^{\pi}(b_0) := \mathbb{E}_\pi [\sum_{t=0}^\infty \gamma^t R\left(b_t, \pi(b_t)\right) \ | \ b_0].
\end{equation}

%===========================================================================
\startpara{Dynamic agents}
Consider a robot operating in an environment with $N$ dynamic agents whose trajectories are \emph{a priori} unknown. 
Let $X_t := (X_{t,1}, \dots, X_{t,N})$ denote the joint agent state at timestep $t$, where $X_{t,i}$ represents the $i$-th dynamic agent's state. 
\revise{In this work, we assume that $X_t$ is fully observable to the robot at timestep $t$.}
Assume the agents' trajectories adhere to an unknown distribution $\D$.
Let $X := (X_0, X_1, \dots) \sim \D$ be a random trajectory sampled from this distribution.
We represent the safety constraint, which requires the minimum distance from the robot to any of the $N$ dynamic agents to exceed a safety buffer $\epsilon \in \mathbb{R}^+$, through the following Lipschitz continuous constraint function:
\begin{equation} \label{eqn:safety}
    c(s_t, X_t) := \min_{i \in \{1,\dots, N\}} \|s_t - X_{t,i}\| - \epsilon.
\end{equation}

Given a belief state $b_t$ of the POMDP $\M$, we compute the probability of $b_t$ satisfying the safety constraint by summing over the probabilities $b_t(s)$ for all states $s \in \supp(b_t)$ that meet the condition $c(s, X_t) \ge 0$, denoted as:
\begin{equation} \label{eqn:prob-safe}
    \rho(b_t, X_t) := \sum_{s \in \supp(b_t)} b_t(s) \cdot \indicator_{\{c(s, X_t) \ge 0\}}.
\end{equation}

We define the expected average probability of satisfying the safety constraint across all potential POMDP executions initiated from the initial belief state $b_0$ under a policy $\pi$ as:
\begin{equation}
    \phi^{\pi}(b_0) := \mathbb{E}_{\pi} [\lim_{T\to \infty}\frac{1}{T}\sum_{t=0}^{T-1} \rho(b_t, X_t) \ | \ b_0, X_t \sim \D]. 
\end{equation}

\startpara{Problem}
Given a POMDP model $\M$ for a robotic agent with initial belief state $b_0$, unknown random trajectories $X \sim \D$ of $N$ dynamic agents, and a failure rate $\delta \in (0,1)$, the objective is to compute an optimal POMDP policy $\pi^*$ that maximizes the expected return $V^{\pi}(b_0)$ while ensuring that the expected average probability of satisfying the safety constraint is at least $1-\delta$, denoted by $\phi^{\pi}(b_0) \ge 1 - \delta$.

\section{Preliminaries} \label{sec:prelim} %1p
To tackle the problem under consideration, we propose a safe online planning approach for POMDPs that accounts for the uncertainty of dynamic agents' predicted trajectories. The key idea is as follows. Initially, we convert the probabilistic safety constraint concerning dynamic agents, i.e., $\phi^{\pi}(b_0) \ge 1 - \delta$, into an equivalent almost-sure safety constraint. This conversion is facilitated through Adaptive Conformal Prediction (ACP), a technique capable of adaptively quantifying the uncertainty of trajectory predictors and generating prediction regions with predefined probability thresholds. Subsequently, safety shields are constructed
to accommodate the previously computed prediction regions of the dynamic agents, and they are integrated into the Partially Observable Monte Carlo Planning (POMCP) algorithm for online planning. 

We now present the essential preliminaries on ACP for trajectory prediction in \sectref{sec:acp}, and the POMCP algorithm for online planning in \sectref{sec:pomcp}.

%===========================================================================
\subsection{ACP-based Trajectory Prediction} \label{sec:acp}

Assume there exists a trajectory predictor capable of making predictions about the future trajectories of dynamic agents for a finite horizon $H$ based on their past trajectories. Though our proposed approach is agnostic to the prediction method, we employ an LSTM model as the trajectory predictor in this work and make no additional assumptions about the trajectory distribution $\mathcal{D}$. 
To consider the uncertainty of predicted trajectories, which could influence the satisfaction of safety constraints, we compute ACP prediction regions using the method described in~\cite{dixit2023adaptive}.

Let $(\hat{X}_t^1, \dots, \hat{X}_t^H)$ represent the predicted trajectory of dynamic agents' future states starting at timestep $t$ and extending to the prediction horizon $H$, 
where $\hat{X}_t^\tau:= (\hat{X}_{t,1}^\tau, \dots, \hat{X}_{t,N}^\tau)$ is the predicted joint state of $N$ agents made at timestep $t$ for horizon $\tau \in \{1, \dots, H\}$. 
However, we cannot evaluate the prediction error for future states since the ground truths $(X_{t + 1},\dots, X_{t + H})$ are unknown at timestep $t$.
We adopt the concept of \emph{time-lagged nonconformity score}, as defined in~\cite{dixit2023adaptive}, which quantifies the $\tau$ step-ahead prediction error made $\tau$ timesteps ago, denoted by $\beta_t^{\tau}:= \|X_t - \hat{X}_{t - \tau}^{\tau} \|$.

For each prediction horizon $\tau \in \{1, \dots, H\}$, we calculate an ACP prediction region $\beta_{t}^{\tau} \le C_{t}^{\tau}$ based on $(\beta_{t-K}^{\tau}, \dots, \beta_{t-1}^{\tau})$ with a sliding window of size $K$, ensuring $\Pr(\beta_{t}^{\tau} \le C_{t}^{\tau}) \ge 1-\delta$, where $\delta \in (0,1)$ denotes a failure probability. The value of $C_{t}^{\tau}$ is determined by identifying the $\lceil(K+1)(1-\lambda^\tau_{t})\rceil^{\text{th}}$ smallest value among $(\beta_{t-K}^{\tau},\dots,\beta_{t-1}^{\tau})$, with the parameter $\lambda^\tau_{t}$ updated recursively as follows:
\begin{equation}\label{eq:failureupdate}
\lambda_{t}^{\tau}:= \lambda_{t-1}^{\tau} + \alpha(\delta - \indicator_{\{C_{t-1}^{\tau} < \beta_{t-1}^{\tau}\}})
\end{equation}
where $\alpha \in (0,1)$ is the learning rate and $\lambda_0^{1} \in (0,1)$ is a constant for the initial value.

%-------------------
\begin{figure}[t]
    \centering
    \includegraphics[width=.9\columnwidth]{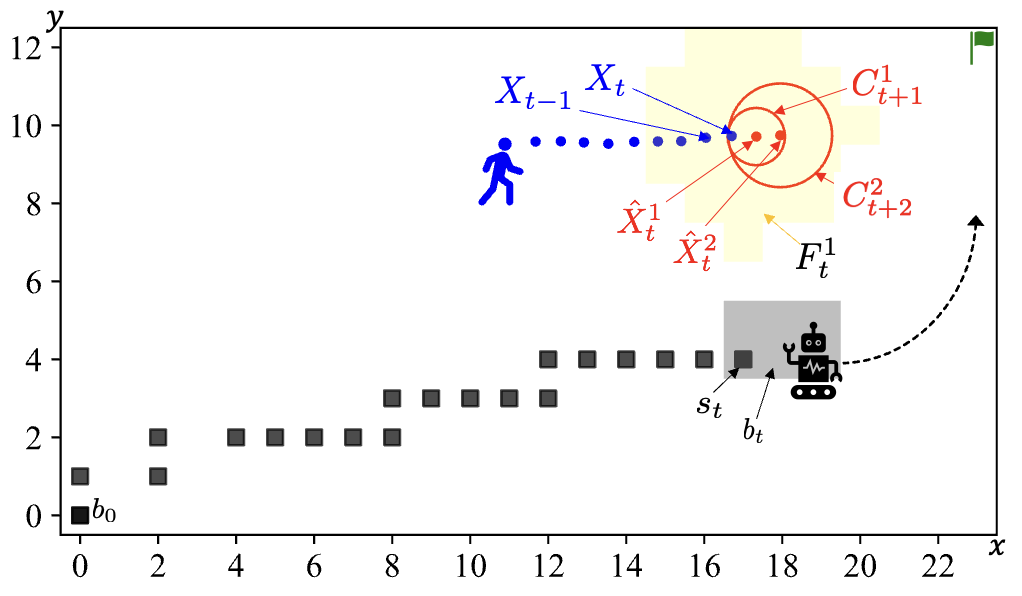}
    \caption{Example gridworld with a robot navigating towards a flag while avoiding a pedestrian. \revise{The robot} moves \emph{east}, \emph{south}, \emph{west}, or \emph{north}, reaching the adjacent grid cell with probability 0.1 or one cell further with probability 0.9. Gray shadow: robot's belief state $b_t$ including state $s_t$. Red circles: ACP prediction regions of uncertain predictions about pedestrian states. Yellow shadow: unsafe states per one-step prediction at timestep $t$.}
    \label{fig:example}
\end{figure}
%-------------------

\begin{examp}\label{eg:acp}
Consider a robot navigating in a gridworld with one dynamic agent (a pedestrian) as shown in \figref{fig:example}. 
Let $\langle x,y\rangle$ denote a two-dimensional position in the gridworld. 
At timestep $t$, the dynamic agent's (joint) state is $X_t=(\langle 16.702, 9.726 \rangle)$.
A trajectory predictor with the prediction horizon $H=2$ yields one-step and two-step ahead predictions as 
$\hat{X}_t^1=(\langle 17.334, 9.711 \rangle)$ and $\hat{X}_t^2=(\langle17.947, 9.743\rangle)$, respectively.
We compute the time-lagged nonconformity score $\beta_t^{1} = \|X_t-\hat{X}_{t-1}^1\|=0.068$,
where $\hat{X}_{t-1}^1 = (\langle 16.650, 9.682 \rangle)$ is the prediction about state $X_t$ made one timestep ago at $t-1$. 

Next we compute the ACP prediction region $C_{t+1}^{1}$ to ensure $\Pr(\beta_{t+1}^{1} \le C_{t+1}^{1}) \ge 1-\delta$ such that 
the prediction error $\beta_{t+1}^{1} = \|X_{t+1}-\hat{X}_t^1\|$ is bounded with the failure probability $\delta = 0.05$. 
Let the size of the sliding window be $K=30$ and the learning rate be $\alpha=0.0008$.
Suppose $C_t^{1} = 0.736$ and $\lambda_t^{1}=0.0495$. 
We have $\lambda_{t+1}^{1} = \lambda_t^{1} + \alpha(\delta - \indicator_{\{C_t^{1} < \beta_t^{1}\}}) = 0.04954$.
We determine the value of $C_{t+1}^{1}$ by finding the $\lceil(K+1)(1-\lambda^1_{t+1})\rceil^{\text{th}}$, which is the $30^{\text{th}}$ smallest value among $(\beta_{t-29}^{1},\dots,\beta_t^{1})$. This process yields $C_{t+1}^{1}=0.736$. 
Similarly, the ACP prediction region to bound the prediction error $\beta_{t+2}^{2} = \|X_{t+2}-\hat{X}_t^2\|$ is computed as $C_{t+2}^{2}=1.329$. 
The obtained ACP prediction regions are plotted as red circles in \figref{fig:example}, centered at $\hat{X}_t^1$ and $\hat{X}_t^2$, with radii $C_{t+1}^{1}$ and $C_{t+2}^{2}$, respectively.
\end{examp}

%===========================================================================
\subsection{Partially Observable Monte-Carlo Planning} \label{sec:pomcp}

We employ the \emph{Partially Observable Monte Carlo Planning} (POMCP) algorithm~\cite{silver2010monte}, a popular method for POMDP online planning that interleaves the policy computation and execution. 
Each timestep $t$ starts with the POMCP algorithm deploying Monte Carlo tree search~\cite{coulom2006efficient} to navigate a search tree. The root node of this tree is $\cT(h_t)=\langle \nset(h_t), \vset(h_t), \pset(h_t)\rangle$. Here, $\nset(h_t)$ counts how often the history $h_t$ has been visited, $\vset(h_t)$ calculates the expected return from all simulations starting at $h_t$, and $\pset(h_t)$ contains particles representing POMDP states that estimate the belief state $b_t$. The algorithm iterates through four main steps:

\revise{
(1) \textbf{Selection:} A state $s$ is selected at random from the particle set $\pset(h_t)$.
(2) \textbf{Simulation:} If $\cT(h)$ is a non-leaf node, an action $a$ is selected to maximize $\vset(ha) + c \sqrt{\frac{\log \nset(h)}{\nset(ha)}}$ using the \emph{upper confidence bound} (UCB)~\cite{auer2002finite} to balance exploration and exploitation. 
(3) \textbf{Expansion:} Upon reaching a leaf node ${\cT(h)}$, new child nodes are introduced for every action $a \in A$, expressed as $\cT(ha)= \langle \nset_{\mathit{init}}(ha), \vset_{\mathit{init}}(ha), \emptyset \rangle$.
Then, an action $a$ is selected based on a predefined rollout policy such as uniform random selection. A state $s'$ is simulated using a black-box simulator $(s',o,r) \sim \cG(s,a)$ and added to $\pset(hao)$. This process runs until reaching a predetermined depth.
(4) \textbf{Backpropagation:} After simulation, the search tree nodes are updated with new data from the path.
}

The planning phase at timestep $t$ concludes once a target number of these iterations has occurred or a time limit is reached. The agent then takes the best action $a_t = \argmax_a\vset(h_ta)$, receives a new observation $o_{t+1}$, and proceeds to the next step, constructing a search tree from the new root node $\cT(h_ta_to_{t+1})$.
POMCP is effective because it mitigates the \emph{curse of dimensionality} through state sampling and the \emph{curse of history} via history sampling with a black-box simulator.

\section{Approach} \label{sec:approach} %2.75p

We develop a novel approach that constructs safety shields on-the-fly using ACP-based trajectory predictions for dynamic agents, and shields unsafe actions in POMDP online planning. We first define safety shields in \sectref{sec:shield}, present an algorithm for on-the-fly shield construction in \sectref{sec:construct}, describe the shielding method for safe online planning in \sectref{sec:safe-pomcp}, and analyze the correctness and complexity of the proposed approach in \sectref{sec:correct}.

%===========================================================================
\subsection{ACP-induced Safety Shield} \label{sec:shield}

Given a $\tau$-step ahead prediction $\hat{X}_t^\tau$ made at timestep $t$ for the dynamic agents' future state, and its corresponding ACP region $C_{t+\tau}^{\tau}$ computed as per \sectref{sec:acp}, we define the safety constraint using the Lipschitz continuous function described in \eqnref{eqn:safety} as follows:
\begin{equation} \label{eqn:safety-acp}
    c(s_{t+\tau}, \hat{X}_t^\tau) \ge L \cdot C_{t+\tau}^{\tau}
\end{equation}
where $L>0$ is the Lipschitz constant and $\tau \in \{1, \dots, H\}$ assuming prediction horizon $H$. 

Denote by $F_t^\tau := \{s \in S \mid c(s, \hat{X}_t^\tau) < L \cdot C_{t+\tau}^{\tau}\}$ the $\tau$-step ahead prediction region of $\hat{X}_t^\tau$ made at timestep $t$. It follows from \cite{dixit2023adaptive} that $Pr({X}_{t+\tau}\in F_t^\tau ) \ge 1-\delta, \forall \tau \in \{1, \dots, H\}$, where ${X}_{t+\tau}$ is the true state of the dynamic agents at timestep $t+\tau$. By ensuring that the prediction regions $\{F_t^\tau\}_{\tau\in \{1, \dots, H\}}$ are avoided almost-surely (i.e., with probability 1) at every timestep $t$, it becomes possible to achieve the desired probabilistic safety guarantee, i.e.,  $\phi^{\pi}(b_0) \ge 1 - \delta$, with regards to the dynamic agents.

\begin{examp}
We have $\hat{X}_t^1=(\langle 17.334, 9.711 \rangle)$ and $C_t^{1} = 0.736$ from \egref{eg:acp}.
Let the Lipschitz constant be $L=1$ and the safety buffer $\epsilon=2$.
Suppose the robot's actual state at timestep $t+1$ is $s_{t+1}=(18,4)$. 
We have $c(s_{t+1}, \hat{X}_t^1) = \|s_{t+1} - \hat{X}_t^1\| - \epsilon = 3.7497 \ge C_t^{1}$.
Thus, the safety constraint is satisfied. 
\end{examp}

In \cite{junges2021enforcing}, it was demonstrated that, for enforcing almost-sure safety specifications, belief probabilities are irrelevant and only the belief support is important. Inspired by this observation, we define a winning belief support and winning regions for a given horizon $h$, which are used for shielding unsafe actions during online planning.

We say that a POMDP policy $\pi$ is \emph{winning} from belief state $b_t$ for a finite horizon $h \le  H$ iff every state in the belief supports $\supp(b_{t+j})$ for $j \in \{0, \dots, h\}$ of all possible executions under the policy $\pi$ satisfies the safety constraint of \eqnref{eqn:safety-acp}.

A belief state $b \in B$ is considered winning for a horizon $h$ if there exists an $h$-step horizon winning policy $\pi$ originating from $b$, and the belief support, denoted as $\supp(b)$, is termed a \emph{winning belief support} for the horizon $h$. 
A set of belief supports, denoted by $W \subseteq S_B$, is termed a \emph{winning region} for an $h$-step horizon if every belief support $\Theta \in W$ is winning for the horizon $h$. 
In the special case when $h=0$, we say that $W$ is a winning region with a zero-step horizon iff all states belonging to each belief support $\supp(b) \in W$ satisfy the safety constraint of \eqnref{eqn:safety-acp}.

To enforce safety, we can define a safety shield, denoted by $\xi: \Theta \to 2^A$, which restricts actions to those leading solely to successor belief supports within the winning region $W$.

%===========================================================================
\subsection{Computing Winning Regions for Shields} \label{sec:construct}

\begin{algorithm}[t]
\small
\caption{Computing winning regions}\label{alg:wr}
\DontPrintSemicolon
\KwIn{POMDP model $\M$, a winning belief state $b_t$, a set of dynamic agents' predicted states $\{\hat{X}_t^\tau\}_{\tau=1}^H$ and ACP prediction regions $\{C_{t+\tau}^{\tau}\}_{\tau=1}^H$.}
\KwOut{A set of winning regions $\{W_t^\tau\}_{\tau=1}^H$.}
Compute the set of reachable belief supports $S_B^{b_t,H}$ \;
Construct a belief-support transition system $\M_B^{b_t,H}$ \;
Compute the set of unsafe states $\Psi_t$ in $\M_B^{b_t,H}$ \;
$W_t^{H} \gets S_{B}^{b_t, H}\setminus \{\Theta\in S_{B}^{b_t, H} \mid \langle \Theta, H \rangle \in \Psi_t\}$ \;
\For{$\tau=H-1$ \KwTo $1$}{
    \ForEach {$\Theta \in S_B^{b_t,H}$}{
        \If {$\langle \Theta, \tau \rangle \notin \Psi_t$}{
            \ForEach {$a\in A$}{
                \If {$\post(\langle \Theta, \tau \rangle, a) \subseteq W_t^{\tau+1}$}{
                    insert $\Theta$ to $W_t^\tau$  \;
                }
            }
        }
    }
}
\Return{$\{W_t^\tau\}_{\tau=1}^H$}
\end{algorithm}

We present \agref{alg:wr} for computing a set of winning regions, denoted by $\{W_t^\tau\}_{\tau=1}^H$, to construct safety shields on-the-fly at timestep $t$, where each winning region $W_t^\tau$ has a winning horizon of $H-\tau$.

First, given a winning belief state $b_t$, we compute the set of belief supports of the POMDP $\M$ that are reachable within $H$ steps, denoted by $S_B^{b_t,H} \subseteq S_B$. 

Next, we construct the reachable fragment of a belief-support transition system (BSTS) for the POMDP $\M$, 
denoted by a tuple $\M_B^{b_t,H} = \{S_B^{b_t,H} \times Q, \bar{s}_B^{b_t, H}, A, T_B^{b_t, H}\}$, 
where the state space is the product of $S_B^{b_t,H}$ and a time counter $Q = \{0,1,\dots,H\}$,
the initial state is $\bar{s}_B^{b_t, H} = \langle \supp(b_t),0 \rangle$,
the set of actions $A$ are the same as in the POMDP $\M$,
and the transition function is given by 
$T_B^{b_t, H}(\langle \Theta,q \rangle, a) = \langle \Theta',q+1 \rangle$ if  
\(
\Theta' \in \left\{ \bigcup_{s\in \Theta} \{s' \in S \mid T(s,a,s')>0 \text{ and } o \in \obs(s')\} \mid o \in O \right\}.
\) The transition function $T_B^{b_t, H}$ is constructed as follows. Given a belief support $\Theta\in S_B^{b_t,H}$, a timestep $q\in Q$, and an action $a$, our initial step involves calculating the set of all potential successor states from $\Theta$ and $a$. This is accomplished by determining the set of all possible successor states from each state $s\in \Theta$ and then taking the union of these sets. 
Following that, we reorganize these states into a set of belief supports based on the principle that   states within a belief support share the same observation. Finally, the timestep is increased by 1.
Let $\post(\langle \Theta, q \rangle, a) := \{\Theta' \mid T_B^{b_t, H}(\langle \Theta,q \rangle, a) = \langle \Theta',q+1 \rangle\}$ denote the set of all possible successor belief supports.

Given a set of dynamic agents' predicted states $\{\hat{X}_t^\tau\}_{\tau=1}^H$ and ACP regions $\{C_{t+\tau}^{\tau}\}_{\tau=1}^H$, we compute all possible POMDP states where the safety constraint could be violated, denoted as 
$F_t^\tau := \{s \in S \mid c(s, \hat{X}_t^\tau) < L \cdot C_{t+\tau}^{\tau}\}$.
The set of unsafe states in the BSTS $\M_B^{b_t,H}$ is defined as 
$\Psi_t := \{\langle \Theta, q \rangle \in S_B^{b_t,H} \times Q \mid \exists s \in \Theta \text{ such that } s \in F_t^q\}$.

We compute the set of winning regions $\{W_t^\tau\}_{\tau=1}^H$ recursively in a backward manner.
Let $W_t^{H}$ be the set of reachable belief supports $\Theta \in S_B^{b_t, H}$, excluding those that result in unsafe states where $\langle \Theta, H \rangle \in \Psi_t$.
Starting from $\tau=H-1$, we add a reachable belief support $\Theta \in S_B^{b_t, H}$ to the winning region $W_t^\tau$ only if both of the following two conditions hold:
(C1) $\langle \Theta, \tau \rangle$ does not belong to the unsafe set $\Psi_t$;
and (C2) there exists an action $a \in A$ that leads solely to winning successor belief supports $\post(\langle \Theta, \tau \rangle, a) \subseteq W_t^{\tau+1}$ from $\Theta$ in the BSTS $\M_B^{b_t,H}$.

\begin{examp}
Following previous examples, we compute the set of unsafe POMDP states for a one-step prediction at timestep $t$ as 
$F_t^1 = \{s \in S \mid c(s, \hat{X}_t^1) < C_{t+1}^{1}\}$. 
These states are represented by the yellow shadow in \figref{fig:example}. 
Similarly, for a two-step prediction at timestep $t$, we compute the set of unsafe states as 
$F_t^2 = \{s \in S \mid c(s, \hat{X}_t^2) < C_{t+2}^{2}\}$. 
The winning region $W_t^2$ is identified as the set of belief supports $S_{B}^{b_t, 2}$ that can be reached from $b_t$ within two steps, while excluding those that contain any unsafe states from $F_t^2$.
We say that $W_t^2$ has a zero-step winning horizon, because we do not evaluate the safety of actions leading to states beyond the prediction horizon $H=2$. 
The one-step horizon winning region $W_t^1$ is identified as the set of reachable belief supports that do not contain unsafe states from $F_t^1$ and can lead solely to successor belief supports in $W_t^2$.
\end{examp}

%===========================================================================
\subsection{Safe Online Planning via Shielding} \label{sec:safe-pomcp} 

\agref{alg:shield} illustrates the proposed safe POMDP online planning approach. 
At each planning step $t$, it first predicts dynamic agents' trajectories $\{\hat{X}_t^\tau\}_{\tau=1}^H$ and computes ACP regions $\{C_{t+\tau}^{\tau}\}_{\tau=1}^H$ as described in \sectref{sec:acp}.
Then, \agref{alg:wr} is applied to compute a set of winning regions $\{W_t^\tau\}_{\tau=1}^H$ for the safety shield. 

In Line 5, the procedure $\mathsf{shieldPOMCP}$ is called, integrating shields with the POMCP algorithm (see \sectref{sec:pomcp}), which navigates a search tree whose root node is $\cT(h_t)$. 
During the simulation phase of the POMCP algorithm, when an action $a \in A$ is selected (either by the UCB rule or during rollout) for the history $h_{t+\tau-1}$, with $\tau \in \{1, \dots, H\}$, and a black-box simulator generates $(s',o,r) \sim \cG(s,a)$, the procedure checks if the particle set $\pset(h_{t+\tau-1}ao) \cup \{s'\}$ belongs to the winning region $W_t^\tau$. 
If the resulting particle set is not a winning belief support, the branch of the tree starting from node $\cT(h_{t+\tau-1}a)$ is pruned, effectively shielding action $a$ at node $\cT(h_{t+\tau-1})$.
If the simulation depth exceeds $H$, no shield is applied to actions selected for any history beyond $h_{t+H}$. 

When the POMCP planning concludes at timestep $t$, the best action $a_t$ is selected from the set of allowed actions at node $\cT(h_t)$ as the one that achieves the maximum value of $\vset(h_ta)$.
We set the policy $\pi^*$ with $\pi^*(b_t)=a_t$. 
The agent executes $a_t$, receives an observation $o_{t+1}$ and updates the belief state $b_{t+1}$ for the next step. 

\begin{algorithm}[t]
\small
\caption{Safe online planning via shielding}\label{alg:shield}
\DontPrintSemicolon
\KwIn{POMDP model $\M$, an initial belief state $b_0$, a distribution $\D$ of dynamic agents' trajectories, a prediction horizon $H$, and a failure probability $\delta$.}
\KwOut{A safe POMDP policy $\pi^*$.}
\For{$t=0$ \KwTo $\infty$}{
    predict dynamic agents' trajectories $\{\hat{X}_t^\tau\}_{\tau=1}^H \sim \D$ \;
    compute ACP prediction regions $\{C_{t+\tau}^{\tau}\}_{\tau=1}^H$ w.r.t failure rate $\delta$  \;
    compute winning regions $\{W_t^\tau\}_{\tau=1}^H$ \tcp*[r]{\agref{alg:wr}}
    $\pi^*(b_t) \gets \mathsf{shieldPOMCP}(\cT(h_t),\{W_t^\tau\}_{\tau=1}^H)$ \;
    $(o_{t+1}, b_{t+1}) \gets$ execute action $\pi^*(b_t)$
}
\Return{$\pi^*$}
\end{algorithm}

%-------------------
\begin{figure*}[t]
    \centering
    \includegraphics[width=\linewidth]{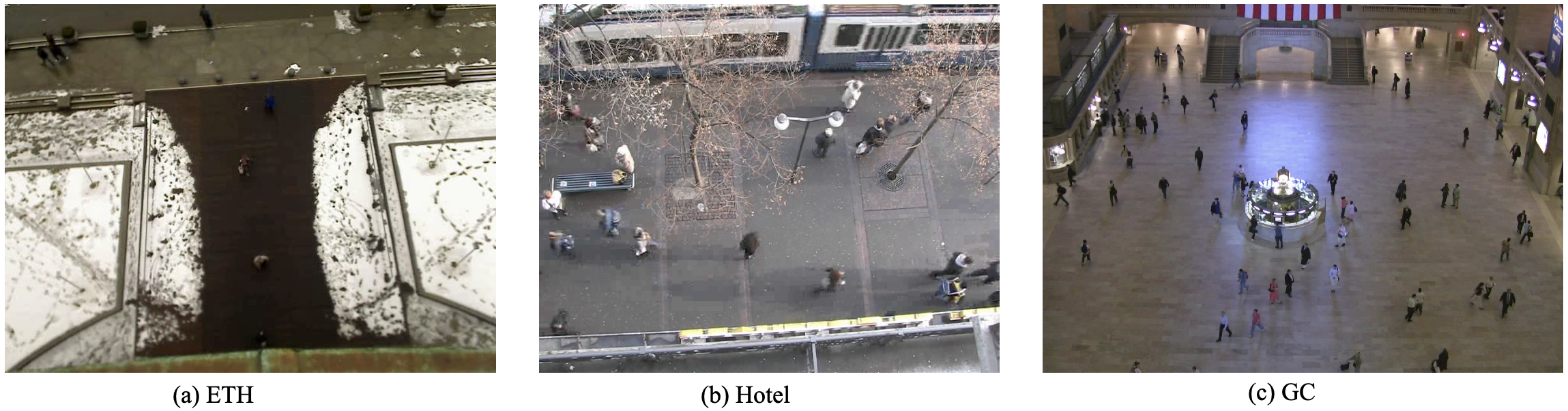}
    \caption{Example scenes of real-world pedestrians trajectories from benchmark datasets~\cite{amirian2020opentraj}.}
    \label{fig:datasets}
\end{figure*}
%-------------------

\begin{examp}
At timestep $t$, consider a state $s= \langle 17,5 \rangle$ sampled from the particle set $\pset(h_t)$ at the root node $\cT(h_t)$. During the POMCP simulation, suppose action $a=\emph{east}$ is chosen for history $h_t$, and a black-box simulator yields $(s',o,r) \sim \cG(s,a)$ with $s' = \langle 18,5 \rangle$. We then check if $\pset(h_tao) \cup \{s'\}$ falls within the winning region $W_t^1$; it does, so the simulation process advances with action $a'$ for history $h_tao$. Assuming $a'=\emph{north}$ leads to $(s'',o',r') \sim \cG(s',a')$ with $s'' = \langle 18,7 \rangle$, an unsafe state in $F_t^2$, the updated particle set $\pset(h_taoa'o') \cup \{s''\}$ falls outside of $W_t^2$. Therefore, action \emph{north} is shielded at node $\cT(h_tao)$.
\end{examp}

%===========================================================================
\subsection{Correctness and Complexity} \label{sec:correct}

\startpara{Correctness}
The correctness of \agref{alg:wr} is stated in \lemref{lem:wr} and the correctness of \agref{alg:shield}, with respect to the problem statement in \sectref{sec:problem}, is stated in \thmref{thm:main}. The proofs are given in the Appendix. 

\begin{lemma}\label{lem:wr}
The output of \agref{alg:wr}, denoted by $\{W_t^\tau\}_{\tau=1}^H$, comprises a set of winning regions, with each $W_t^\tau$ representing a winning region for an $(H-\tau)$-step horizon.
\end{lemma}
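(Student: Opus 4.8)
The plan is to prove \lemref{lem:wr} by backward induction on $\tau$, mirroring the structure of the recursion in \agref{alg:wr}. The base case is $\tau = H$: here $W_t^H$ is defined as $S_B^{b_t,H} \setminus \{\Theta \mid \langle \Theta, H\rangle \in \Psi_t\}$, so every $\Theta \in W_t^H$ satisfies the property that all its states meet the safety constraint of \eqnref{eqn:safety-acp} at horizon $H$ (by definition of $\Psi_t$ and $F_t^H$). Since an $(H-H)$-step horizon means a zero-step horizon, and a zero-step winning region only requires every state in each belief support to satisfy the safety constraint at that horizon, this establishes that $W_t^H$ is a winning region for a $0$-step horizon.

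For the inductive step, I would assume $W_t^{\tau+1}$ is a winning region for an $(H-\tau-1)$-step horizon and show $W_t^\tau$ is winning for an $(H-\tau)$-step horizon. Take any $\Theta \in W_t^\tau$. By construction (condition C1), $\langle \Theta, \tau \rangle \notin \Psi_t$, so every $s \in \Theta$ satisfies the safety constraint at horizon $\tau$. By condition C2, there is an action $a \in A$ with $\post(\langle \Theta, \tau\rangle, a) \subseteq W_t^{\tau+1}$. The key step is to assemble a horizon-$(H-\tau)$ winning policy $\pi$ from $b_t$ (more precisely, from any belief state $b$ with $\supp(b) = \Theta$): play $a$ at the current belief, then after observing $o$ the successor belief support is some $\Theta' \in \post(\langle \Theta, \tau\rangle, a) \subseteq W_t^{\tau+1}$, which by the induction hypothesis is winning for an $(H-\tau-1)$-step horizon, so continue with the corresponding policy from there. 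Composing these witnesses yields a policy under which all belief supports encountered in the first $H-\tau$ steps — namely $\Theta$ itself plus the $\le H-\tau-1$ further supports along any execution — satisfy \eqnref{eqn:safety-acp}, so $\Theta$ is winning for an $(H-\tau)$-step horizon. Hence $W_t^\tau$ is a winning region for that horizon.

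Two auxiliary points need care. First, I should argue that the successor belief supports reached under action $a$ in the BSTS $\M_B^{b_t,H}$ coincide with the belief supports of the POMDP belief states reachable under $a$; this follows from the definition of $T_B^{b_t,H}$ (grouping successor states by observation via $\obs$) together with the Bayesian belief update, whose support equals $\{s' : Z(s',a,o)>0 \text{ and } \exists s\in\Theta,\, T(s,a,s')>0\}$ — so the BSTS abstraction is exact on supports, which is the essential content borrowed from \cite{junges2021enforcing}. Second, I should note that the horizon counter in $Q$ correctly tracks elapsed steps, so that a belief support reached after $j$ steps from $b_t$ is checked against $F_t^j$, matching the definition of a winning policy. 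The main obstacle I anticipate is not any deep difficulty but rather stating the composition-of-policies argument cleanly: a belief-based policy $\pi : B \to A$ must be well defined, yet the witness action depends on the horizon counter $\tau$ as well as the support, so strictly speaking one needs the finite-horizon notion of winning to range over history-dependent (or time-stamped) policies, or one argues at the level of the BSTS where the state already includes the counter. I would resolve this by phrasing the induction directly in terms of the time-stamped states $\langle \Theta, \tau\rangle$ of $\M_B^{b_t,H}$ and only at the end translating back to the belief-state language of \sectref{sec:shield}.
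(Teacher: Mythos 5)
Your proposal is correct and follows essentially the same route as the paper's proof: backward induction on $\tau$, with the base case $W_t^H$ being a zero-step winning region by the definition of $\Psi_t$, and the inductive step using conditions (C1) and (C2) to extend the successor supports' winning policies by the witness action into an $(H-\tau)$-step winning policy. The two auxiliary points you raise (exactness of the BSTS abstraction on supports, and the time-dependence of the composed policy) are glossed over in the paper's proof, so your treatment is if anything slightly more careful, but it is not a different argument.
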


\begin{theorem}\label{thm:main}
Given a POMDP model $\M$ for a robotic agent with initial belief state $b_0$, the unknown random trajectories $X\sim \D$ of $N$ dynamic agents with a prediction horizon $H$, and a failure probability $\delta \in (0,1)$, the policy $\pi^*$ computed by \agref{alg:shield} achieves the maximal expected return $V^{\pi^*}(b_0)$ while ensuring safety, i.e., $\phi^{\pi^*}(b_0) \ge 1 - \delta$. 
\end{theorem}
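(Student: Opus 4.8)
The plan is to split the claim into two independent parts --- (i) the \emph{safety guarantee} $\phi^{\pi^*}(b_0)\ge 1-\delta$, and (ii) the \emph{optimality} of $\pi^*$ among all policies satisfying the safety constraint --- and prove each in turn, leaning on \lemref{lem:wr}, on the ACP coverage property $\Pr(X_{t+\tau}\in F_t^\tau)\ge 1-\delta$ inherited from \cite{dixit2023adaptive}, and on the standard convergence guarantee of POMCP.

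For part (i), I would first argue that at every timestep $t$ the shield restricts actions so that the realized belief support stays inside a winning region for at least the one-step horizon, hence by \lemref{lem:wr} every state in $\supp(b_t)$ satisfies the ACP safety constraint \eqnref{eqn:safety-acp}, i.e.\ $c(s,\hat X_{t-1}^{1})\ge L\cdot C_{t}^{1}$ for all $s\in\supp(b_t)$ (using the one-step prediction made at $t-1$). The Lipschitz continuity of $c$ in its first argument together with the ACP region then gives the implication: if $X_t\in$ the ``complement'' of $F_{t-1}^{1}$, i.e.\ the true agent state lies within $C_{t}^{1}$ of the prediction $\hat X_{t-1}^{1}$, then $c(s,X_t)\ge 0$ for every $s\in\supp(b_t)$, so $\rho(b_t,X_t)=1$. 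Since $\Pr(X_t\in F_{t-1}^{1})\le\delta$ by the ACP coverage property, we get $\mathbb{E}[\rho(b_t,X_t)]\ge 1-\delta$ at each $t$; averaging over $t$ and taking the limit (the bound holds uniformly in $t$, so the Cesàro average and the expectation/limit exchange are harmless) yields $\phi^{\pi^*}(b_0)\ge 1-\delta$. I would also note the boundary case: Line~1 of \agref{alg:wr} requires $b_t$ to itself be winning, so an inductive argument is needed showing the shield never drives the agent into a non-winning belief support --- this follows because the shield only permits actions whose successors lie in $W_t^{\tau+1}$, and a belief support in a winning region for horizon $\ge 1$ is in particular safe at the current step and has a safe action available.

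For part (ii), the argument is that within each planning step, $\mathsf{shieldPOMCP}$ is exactly POMCP run on the sub-POMDP obtained by deleting, at each node, the actions the shield forbids (equivalently, by pruning branches leading outside the winning regions). Over this restricted action space POMCP retains its asymptotic optimality --- the UCB-based tree search converges to the value function of the best \emph{shield-compliant} policy as the number of simulations grows --- so $a_t=\argmax_a\vset(h_ta)$ over allowed actions converges to the greedy action of the optimal safe policy. Combined with part (i), which shows the set of shield-compliant policies is contained in the set of policies meeting $\phi^{\pi}(b_0)\ge 1-\delta$, and with the converse inclusion (any safe policy, restricted to horizon-$H$ reasoning, only ever uses shield-allowed actions, since violating the shield would with positive probability reach $F_t^\tau$ and break the $1-\delta$ bound), one concludes that $\pi^*$ attains $\max_{\pi:\phi^\pi(b_0)\ge 1-\delta}V^\pi(b_0)$, which is the claimed maximal expected return.

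The main obstacle I anticipate is making the optimality half rigorous: POMCP's convergence is an asymptotic-in-simulations statement, and it has to be transported to the shielded setting where the admissible action set is itself belief-support dependent and recomputed on-the-fly each timestep; one must check that the pruning is consistent across simulations within a planning step and that the ``restricted POMDP'' POMCP effectively solves is well-defined. A secondary subtlety is the exact correspondence between ``shield-compliant for horizon $H$'' and ``$\phi^\pi(b_0)\ge 1-\delta$'' --- the shield enforces avoidance of $F_t^\tau$ for $\tau\le H$ only, and one has to confirm that this receding-horizon enforcement, reapplied at every $t$, is enough to maintain the per-step bound $\mathbb{E}[\rho(b_t,X_t)]\ge 1-\delta$ that drives the average. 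The safety half, by contrast, I expect to be a fairly direct chaining of \lemref{lem:wr}, Lipschitz continuity, and the ACP marginal coverage guarantee.
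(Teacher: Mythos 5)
Your overall route matches the paper's: shield keeps every reachable belief support in a winning region (\lemref{lem:wr}), Lipschitz continuity of $c$ converts the shielded constraint $c(s,\hat X_{t-1}^1)\ge L\cdot C_t^1$ into the implication ``$\|X_t-\hat X_{t-1}^1\|\le C_t^1 \Rightarrow c(s,X_t)\ge 0$'', and the ACP coverage property then lower-bounds the probability of safety; optimality is argued via POMCP's convergence over the shield-restricted action set. However, there is one genuine gap in your safety argument. You invoke a \emph{per-timestep marginal} coverage guarantee, $\Pr(X_t\in F_{t-1}^1)\le\delta$ for each $t$, and then remark that averaging over $t$ is ``harmless'' because the bound ``holds uniformly in $t$.'' Adaptive conformal prediction does not provide this: because $\lambda_t^\tau$ is updated online via \eqnref{eq:failureupdate}, the guarantee is only on the \emph{time average} of the coverage probabilities. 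The paper's proof instead uses Corollary~3 of the ACP reference to obtain
\[
\frac{1}{T}\sum_{t=0}^{T-1}\Pr\bigl(\|X_{t+1}-\hat X_t^1\|\le C_{t+1}^1\bigr)\ \ge\ 1-\delta-p_1,\qquad p_1=\tfrac{\lambda_0^1+\alpha}{T\,\alpha},
\]
and only the limit $T\to\infty$ kills the error term $p_1$, yielding $\phi^{\pi^*}(b_0)\ge 1-\delta$. Your ``harmless averaging'' step is precisely where the real mathematical content lives; as written it rests on a premise ACP does not deliver.

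A secondary issue is your optimality converse: you claim any policy with $\phi^\pi(b_0)\ge 1-\delta$ must be shield-compliant because violating the shield ``would with positive probability reach $F_t^\tau$ and break the $1-\delta$ bound.'' This does not follow. The shield enforces \emph{almost-sure} avoidance of the conservative prediction regions $F_t^\tau$, which is sufficient but not necessary for the averaged probabilistic constraint; a policy could enter $F_t^\tau$ with positive probability at some timesteps and still satisfy $\phi^\pi(b_0)\ge 1-\delta$. The paper does not attempt this converse: it only claims $a_t$ maximizes $\vset(h_ta)$ among the \emph{allowed} (shield-compliant) actions, i.e., optimality relative to the shielded policy class, under the assumption of sufficiently many simulations. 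Your part (ii) is therefore strictly stronger than what is proved (or provable) here, and the specific justification you give for the converse inclusion is incorrect. Dropping the converse and stating optimality within the shield-compliant class would align your argument with the paper's.
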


\revise{It is important to note that, since POMCP is a sampling-based algorithm, there is a risk that approximate belief states might compromise safety guarantees. However, given a sufficiently large particle set and a substantial number of simulations, POMCP can yield near-perfect belief estimates (at the cost of increased computational burden, as detailed in the complexity analysis below). The above theorem operates under this assumption.}

\startpara{Complexity}
There are several components in the complexity analysis of the proposed approach in \agref{alg:shield}. 
The complexity of predicting dynamic agents' trajectories depends on the underlying prediction model. 
The complexity of computing ACP regions at timestep $t$ is $\O(H\cdot N\cdot K \cdot \log(K))$, depending on the prediction horizon $H$, the number of agents $N$, and the sliding window size $K$.
The complexity of \agref{alg:wr} for computing winning regions to construct shields is bounded by $\O(H \cdot |S_B^{b_t,H}| \cdot |A|)$, which depends on the prediction horizon $H$, the number of $H$-step reachable belief supports $S_B^{b_t,H}$ from belief state $b_t$, and the size of the action set $A$. \revise{The complexity of POMCP involves both time and space dimensions and is scenario-dependent. It is influenced by parameters such as the size of the particle set (i.e., the number of particles used for approximating a belief state), the number of simulations, the simulation depth, and the size of the state and observation spaces. } The overhead of adding shielding to POMCP, specifically checking against safety shields, is bounded by $\O\left((|A|\cdot|O|)^H\right)$ per simulation. 
In practice, the overhead is negligible, as demonstrated by the experimental results in the next section.

\section{Experiments} \label{sec:exp} %1.25p
%-------------------
\begin{table*}[t]
\caption{Experiment Results}
\resizebox{1\textwidth }{!}{
\begin{tabular}{c|cccc|cccc|cccc}
\toprule
 &
  \multicolumn{4}{c|}{ETH} &
  \multicolumn{4}{c|}{Hotel} &
  \multicolumn{4}{c}{GC} \\ \midrule
Method &
  $N$ &
  Safety Rate &
  Time (s) &
  Min Distance  &
  $N$ &
  Safety Rate &
  Time (s) &
  Min Distance  &
  $N$ &
  Safety Rate &
  Time (s) &
  Min Distance  \\ \midrule
No Shield &
  \multirow{3}{*}{45} &
  0.893 &
  21.1 &
  0.28$\pm$0.19 &
  \multirow{3}{*}{35} &
  0.944 &
  20.1 &
  0.42$\pm$0.21 &
  \multirow{3}{*}{160} &
  0.91 &
  39.3 &
  0.22$\pm$0.13 \\
Shielding without ACP &
   &
  0.943 &
  21.5 &
  0.39$\pm$0.29 &
   &
  0.969 &
  20.3 &
  0.54$\pm$0.3 &
   &
  0.943 &
  67.2 &
  0.23$\pm$0.13 \\
Shielding with ACP &
   &
  \textbf{0.974} &
  22.1 &
  0.51$\pm$0.29 &
   &
  \textbf{0.988} &
  20.6 &
  0.8$\pm$0.49 &
   &
  \textbf{0.963} &
  71.4 &
  0.28$\pm$0.16 \\ \midrule
No Shield &
  \multirow{3}{*}{55} &
  0.891 &
  21.0 &
  0.26$\pm$0.17 &
  \multirow{3}{*}{45} &
  0.931 &
  20.1 &
  0.38$\pm$0.24 &
  \multirow{3}{*}{180} &
  0.904 &
  39.9 &
  0.2$\pm$0.1 \\
Shielding without ACP &
   &
  0.951 &
  21.8 &
  0.41$\pm$0.25 &
   &
  0.959 &
  20.1 &
  0.48$\pm$0.24 &
   &
  0.938 &
  66.8 &
  0.23$\pm$0.12 \\
Shielding with ACP &
   &
  \textbf{0.975} &
  22.4 &
  0.53$\pm$0.37 &
   &
  \textbf{0.982} &
  20.6 &
  0.62$\pm$0.27 &
   &
  \textbf{0.953} &
  71.1 &
  0.24$\pm$0.16 \\ \midrule
No Shield &
  \multirow{3}{*}{65} &
  0.872 &
  21.2 &
  0.24$\pm$0.13 &
  \multirow{3}{*}{55} &
  0.921 &
  20.2 &
  0.36$\pm$0.18 &
  \multirow{3}{*}{200} &
  0.895 &
  39.6 &
  0.22$\pm$0.11 \\
Shielding without ACP &
   &
  0.943 &
  21.9 &
  0.36$\pm$0.2 &
   &
  0.957 &
  20.3 &
  0.48$\pm$0.29 &
   &
  0.931 &
  65.7 &
  0.2$\pm$0.13 \\
Shielding with ACP &
   &
  \textbf{0.967} &
  22.6 &
  0.42$\pm$0.26 &
   &
  \textbf{0.982} &
  20.3 &
  0.6$\pm$0.24 &
   &
  \textbf{0.951} &
  74.3 &
  0.25$\pm$0.15 \\ \bottomrule
\end{tabular}
\label{tab:exp}
}
\end{table*}
%-------------------

We implemented the proposed approach and evaluated it through various computational experiments.
All experiments were run on a MacBook Pro machine with 10-core 3.2 GHz Apple M1 processor and 16 GB of memory. 

\startpara{Environments}
We consider three gridworld environments, where the movements of $N$ dynamic agents follow real-world pedestrians trajectories derived from benchmark datasets~\cite{amirian2020opentraj}. Specifically, we use three datasets: ETH, Hotel, and GC, for which example scenes are illustrated in \figref{fig:datasets}.
In each gridworld environment, the robot aims to reach a target destination while avoiding pedestrians. The robot can move \emph{east}, \emph{south}, \emph{west}, or \emph{north}, reaching the adjacent cell with probability 0.1 or one cell further with probability 0.9. \revise{The robot has partial observability of its position within a $2\times2$ block but is unsure of the exact grid cell due to noisy sensors.} The reward function is defined as: $1,000$ for reaching the destination, $-1$ per step, and $-10$ per collision. 

\startpara{Hyperparameters}
Each pedestrian trajectory dataset is split into training, validation, and test sets in a ratio of 16:4:5. 
LSTM models with 64 hidden units are trained for the trajectory prediction with a prediction horizon $H=3$. 
We set the failure probability $\delta=0.05$ and the safety buffer to be $\epsilon=0.5$ grid. 
The ACP prediction regions are computed with a learning rate $\alpha=0.0008$.
The POMCP algorithm is set with the following hyperparameters: the number of simulations is 4,096; the simulation depth is 200; and the number of particles sampled from the initial state distribution is 10,000. 

\startpara{Results}
We compare the performance of the proposed approach for safe online planning via ACP-induced shields with two baselines: (i) \emph{No Shield}, i.e., POMCP without shielding; and (ii) \emph{Shielding without ACP}, i.e., POMCP with shields that do not account for ACP prediction regions, by modifying the safety constraint in \eqnref{eqn:safety-acp} to $c(s_{t+\tau}, \hat{X}_t^\tau) \ge 0$.

\tabref{tab:exp} shows the results averaged over 100 runs of each method, for different environments and varying numbers $N$ of dynamic agents. 
Across all cases, the proposed approach achieves a better safety rate (measured by the percentage of time satisfying the safety constraint during a run) than the baselines.
All three methods result in comparable travel times for the robot to reach the destination in the ETH and Hotel environments, while shielding approaches lead to longer travel times when the robot needs to avoid a significantly higher number of pedestrians in the GC environment.
\tabref{tab:exp} also reports the mean and standard deviation of the minimum distance between the robot and pedestrians; the proposed approach is more conservative than the baselines, maintaining a larger minimum distance for safety.

Finally, we observed that shielding does not significantly add to the runtime of online planning. The average computation time for each planning step of POMCP without shields is 0.28 seconds, compared to 0.35 seconds for the proposed approach, incurring an additional 0.07 seconds for constructing winning regions and shielding actions per step.

\section{Conclusion} \label{sec:conclu} %0.25p
This work developed a novel shielding approach aimed at ensuring safe POMDP online planning in dynamic environments with multiple unknown dynamic agents. We proposed to leverage ACP for predicting the future trajectories of these dynamic agents. Subsequently, safety shields are computed based on the prediction regions. Finally, we integrated these safety shields into the POMCP algorithm to enable safe POMDP online planning. Experimental results conducted on three benchmark domains, each with varying numbers of dynamic agents, demonstrated that the proposed approach successfully met the safety requirements while adhering to a predefined failure rate.

For future work, we will assess the effectiveness of the proposed approach across diverse POMDP domains and deploy it in real-world robotic tasks. Another important direction is to explore methods for handling continuous POMDPs to enhance scalability and generalization.

%===========================================================================

\bibliographystyle{IEEEtran}
\bibliography{references}

\appendix \label{sec:app} %0.5p

\setcounter{lemma}{0}
\begin{lemma}
The output of \agref{alg:wr}, denoted by $\{W_t^\tau\}_{\tau=1}^H$, comprises a set of winning regions, with each $W_t^\tau$ representing a winning region for an $(H-\tau)$-step horizon.
\end{lemma}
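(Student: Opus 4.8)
The plan is to prove the statement by backward induction on $\tau$, tracking the invariant that at each stage $\tau \in \{1,\dots,H\}$ the set $W_t^\tau$ produced by \agref{alg:wr} coincides with the set of belief supports that are winning for an $(H-\tau)$-step horizon with respect to the safety constraint of \eqnref{eqn:safety-acp}. Recall the definitions from \sectref{sec:shield}: a belief support $\Theta$ is winning for horizon $h$ iff there is an $h$-step winning policy from any belief state with that support, and $h=0$ degenerates to the requirement that every state in $\Theta$ itself satisfies the safety constraint. The natural object to induct on is the BSTS $\M_B^{b_t,H}$, since by the result of \cite{junges2021enforcing} cited in the paper, only belief supports (not belief probabilities) matter for almost-sure safety, so reasoning over $\M_B^{b_t,H}$ loses no information.

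\emph{Base case ($\tau = H$).} I would show that $W_t^H = S_B^{b_t,H} \setminus \{\Theta \mid \langle\Theta,H\rangle \in \Psi_t\}$ is exactly the set of belief supports winning for a $0$-step horizon. This is essentially immediate from the definitions: $\langle\Theta,H\rangle \in \Psi_t$ iff some $s\in\Theta$ lies in $F_t^H$, i.e., violates \eqnref{eqn:safety-acp} at horizon $H$; its complement within the reachable supports is thus precisely those $\Theta$ all of whose states satisfy the constraint — the definition of $0$-step winning. One point to state carefully is that $F_t^\tau$ as used in the algorithm is indexed by the time counter $q=\tau$, matching the prediction $\hat X_t^\tau$ and region $C_{t+\tau}^\tau$, so the alignment between the BSTS time counter and the prediction horizon is exact.

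\emph{Inductive step.} Assume $W_t^{\tau+1}$ is the set of belief supports winning for an $(H-\tau-1)$-step horizon. The algorithm inserts $\Theta$ into $W_t^\tau$ iff (C1) $\langle\Theta,\tau\rangle\notin\Psi_t$ and (C2) there exists $a\in A$ with $\post(\langle\Theta,\tau\rangle,a)\subseteq W_t^{\tau+1}$. For soundness: if (C1) and (C2) hold, then $\Theta$ itself is safe at horizon $\tau$, and choosing $a$ forces every possible successor belief support (one per observation $o\in O$) to be winning for $(H-\tau-1)$ steps by the inductive hypothesis; concatenating $a$ with the per-successor winning policies yields an $(H-\tau)$-step winning policy from $\Theta$, so $\Theta$ is $(H-\tau)$-step winning. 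For completeness: if $\Theta$ is $(H-\tau)$-step winning, any witnessing policy $\pi$ picks some first action $a$; every state in $\supp(b_t)=\Theta$ must already be safe (the $j=0$ clause in the winning-policy definition), giving (C1), and each successor belief support under $a$ must be $(H-\tau-1)$-step winning along $\pi$, hence in $W_t^{\tau+1}$ by induction, giving (C2) — so the algorithm inserts $\Theta$. I would also note that $\post$ ranges exactly over the reachable successor supports partitioned by observation, so "$\post(\langle\Theta,\tau\rangle,a)\subseteq W_t^{\tau+1}$" faithfully captures "all executions under $a$ stay winning."

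\emph{Main obstacle.} I expect the delicate part to be the completeness direction of the inductive step: making rigorous that an $(H-\tau)$-step winning policy can always be decomposed, after its first action, into a family of $(H-\tau-1)$-step winning policies indexed by the resulting belief support, and conversely that such a family can be reassembled without the horizon bookkeeping going wrong. This requires being explicit that the notion of "winning policy" is over belief states, that the belief-support abstraction is sound for this purpose (invoking \cite{junges2021enforcing}), and that the time counter $q$ in $\M_B^{b_t,H}$ exactly tracks the number of steps elapsed so that $\langle\Theta,\tau\rangle$ is the unique node where $\Theta$ is queried at depth $\tau$. Everything else is routine unwinding of definitions.
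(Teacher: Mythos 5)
Your proof follows essentially the same route as the paper's: backward induction on $\tau$, with the base case $W_t^{H}$ being $0$-step winning directly from the definition of $\Psi_t$, and the inductive step composing the action witnessing condition (C2) with the successor supports' winning policies to build an $(H-\tau)$-step winning policy. The only divergence is that you also argue completeness (that every winning reachable support gets inserted); this is correct but superfluous for the lemma as stated, since the paper's definition of a winning region only requires that every member of $W_t^\tau$ be winning, not that $W_t^\tau$ contain all winning supports, and accordingly the paper proves only the soundness direction.
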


% \startpara{Proof of \lemref{lem:wr}}

\begin{proof}
We prove the lemma by induction on the increasing horizon of winning regions.

\uitstart{Base case}:
By the definition of $W_t^{H}$,
for any belief state $b_{t+H}$ whose belief support $\supp(b_{t+H}) \in W_t^{H}$, every state $s_{t+H} \in \supp(b_{t+H})$ satisfies the safety constraint, i.e.,
$c(s_{t+H}, \hat{X}_t^H) \ge L \cdot C_{t+H}^{H}$.
Thus, $W_t^{H}$ is a winning region for zero-step horizon.

\uitstart{Inductive step}:
Assume that $W_t^{\tau}$ is the winning region for an $(H-\tau)$-step horizon. By definition, for every belief state $b_{t+\tau}$ whose belief support $\supp(b_{t+\tau}) \in W_t^{\tau}$, there exists an $(H-\tau)$-step horizon winning policy $\pi_{t+\tau}$ originating from $b_{t+\tau}$. For every belief state $b_{t+\tau-1}$ with belief support $\supp(b_{t+\tau-1}) \in W_t^{\tau-1}$, there exists at least one action $a_{t+\tau-1} \in A$ that leads to a successor belief support state 
\[\supp(b_{t+\tau}) \in \post(\langle \supp(b_{t+\tau-1}), \tau-1 \rangle, a_{t+\tau-1}) \subseteq W_t^\tau.\]
We can augment policy $\pi_{t+\tau}$ into an $(H-\tau+1)$-step horizon winning policy $\pi_{t+\tau-1}$ with $\pi_{t+\tau-1}(b_{t+\tau-1})=a_{t+\tau-1}$. Thus, $W_t^{\tau-1}$ is the winning region for an $(H-\tau+1)$-step horizon.

\uitstart{Conclusion}: By induction, we have proved that \agref{alg:wr} outputs a set of winning regions $\{W_t^\tau\}_{\tau=1}^H$, with each $W_t^\tau$ having an $(H-\tau)$-step winning horizon.
\end{proof}

\setcounter{theorem}{0}
\begin{theorem}
Given a POMDP model $\M$ for a robotic agent with initial belief state $b_0$, the unknown random trajectories $X\sim \D$ of $N$ dynamic agents with a prediction horizon $H$, and a failure probability $\delta \in (0,1)$, the policy $\pi^*$ computed by \agref{alg:shield} achieves the maximal expected return $V^{\pi^*}(b_0)$ while ensuring safety, i.e., $\phi^{\pi^*}(b_0) \ge 1 - \delta$. 
\end{theorem}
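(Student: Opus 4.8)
The plan is to split the claim into two parts — the optimality of the expected return and the probabilistic safety guarantee — and prove each separately, leaning on \lemref{lem:wr} for the structural properties of the winning regions.

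\textbf{Safety.} First I would fix a timestep $t$ and argue that the action $a_t$ selected by \agref{alg:shield} keeps the robot inside the winning region $W_t^1$. Because $\mathsf{shieldPOMCP}$ prunes every tree branch whose induced particle set leaves the appropriate $W_t^\tau$, the best allowed action at $\cT(h_t)$ leads only to successor belief supports in $W_t^1$, and by \lemref{lem:wr} that is a winning region for an $(H-1)$-step horizon; in particular $\langle \supp(b_t), 0\rangle \notin \Psi_t$, so every state $s \in \supp(b_t)$ satisfies $c(s,\hat X_t^1) \ge L\cdot C_t^1$ — wait, the relevant constraint is $c(s_t, \hat X_{t-1}^1) \ge L\cdot C_t^1$ from the prediction made one step earlier, so I would phrase the invariant in terms of the prediction that was current when $b_t$ was entered. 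Then I invoke the Lipschitz property of $c$ (constant $L$) together with the ACP coverage guarantee from \sectref{sec:shield}, namely $\Pr(X_{t+\tau} \in F_t^\tau) \ge 1-\delta$ equivalently $\Pr(\beta_{t+\tau}^\tau \le C_{t+\tau}^\tau)\ge 1-\delta$, to conclude that whenever the true agent state obeys its ACP bound, $c(s_t, X_t) \ge 0$ holds for all $s_t \in \supp(b_t)$, i.e. $\rho(b_t, X_t) = 1$. A union/averaging argument over the execution then gives $\mathbb{E}_\pi[\lim_{T\to\infty}\frac1T\sum_{t=0}^{T-1}\rho(b_t,X_t)] \ge 1-\delta$, which is exactly $\phi^{\pi^*}(b_0)\ge 1-\delta$. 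I would be explicit that the per-timestep ACP guarantee is a marginal coverage statement, so the average-probability formulation (rather than a joint-over-horizon statement) is what makes the union bound unnecessary and the $1-\delta$ bound tight.

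\textbf{Optimality.} Next I would argue that among all policies satisfying the safety constraint, $\pi^*$ attains the maximal $V^{\pi}(b_0)$. The safety shield $\xi$ restricts the action set at each belief support to exactly those actions that keep all reachable belief supports within the winning regions; by \lemref{lem:wr} this is precisely the set of actions compatible with \emph{some} winning policy, so no safe action is ever incorrectly pruned (completeness of the shield) and no unsafe action is ever allowed (soundness). Hence the shielded POMDP has the same set of safe policies as the original problem, and $\mathsf{shieldPOMCP}$ — being POMCP restricted to this action set — converges (under the stated assumption of sufficiently many particles and simulations, quoted in the remark after the theorem) to the optimal policy over that restricted set, which coincides with the optimal safe policy.

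\textbf{Main obstacle.} I expect the delicate point to be the interface between the \emph{finite} prediction horizon $H$ and the \emph{infinite} horizon over which $\phi^\pi$ is defined: at each step the shield only certifies safety for $H$ steps ahead, and fresh ACP regions are recomputed every step, so I must argue that the recomputation is consistent — i.e. that the belief state $b_t$ handed to \agref{alg:wr} is always genuinely winning with respect to the newly computed regions, which requires that the previous step's shield guaranteed $\langle\supp(b_t),0\rangle\notin\Psi_t$ for the \emph{new} $\Psi_t$. This is where the ``winning belief state'' precondition in \agref{alg:wr} and the inductive maintenance of the invariant across timesteps must be carefully threaded, and it is also where the assumption that $X_t$ is fully observable and that POMCP's particle approximation is near-exact does the real work. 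The remaining steps (Lipschitz estimate, union bound, shield soundness/completeness from \lemref{lem:wr}) are routine once this invariant is set up.
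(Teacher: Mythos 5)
Your decomposition and overall route match the paper's proof: you use the winning-region membership from Lemma~\ref{lem:wr} to obtain the shifted constraint $c(s_{t+1},\hat X_t^1)\ge L\cdot C_{t+1}^1$ on every state in the successor belief support, transfer this to the true agent state via Lipschitz continuity of $c$, invoke the ACP coverage guarantee, and average over time to bound $\phi^{\pi^*}(b_0)$; the optimality half is likewise argued by noting that POMCP maximizes $\vset(h_t a)$ over the shielded action set. Your observation that the average-probability formulation of $\phi^\pi$ is what makes a joint-over-horizon (union-bound) argument unnecessary is exactly the right structural point, and your ``main obstacle'' about threading the winning-belief-state precondition of Algorithm~\ref{alg:wr} across timesteps is a legitimate concern that the paper itself leaves implicit.

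The one genuine gap is in how you justify the coverage step. You invoke ``$\Pr(X_{t+\tau}\in F_t^\tau)\ge 1-\delta$'' as a per-timestep \emph{marginal} coverage statement and build the rest of the safety argument on it. Adaptive conformal prediction does not provide that: with no assumptions on $\D$, the guarantee (Corollary~3 of~\cite{dixit2023adaptive}, which the paper's proof uses) is only the \emph{time-averaged} bound
\begin{equation*}
\frac{1}{T}\sum_{t=0}^{T-1}\Pr\bigl(\| X_{t+1}-\hat X_t^1\|\le C_{t+1}^1\bigr)\;\ge\;1-\delta-p_1,
\qquad p_1=\frac{\lambda_0^1+\alpha}{T\cdot\alpha},
\end{equation*}
so individual timesteps may have coverage well below $1-\delta$, and the bound only becomes $1-\delta$ after taking $T\to\infty$ so that $p_1\to 0$ inside the definition of $\phi^\pi$. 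Your final conclusion survives because $\phi^\pi$ is itself a Cesàro limit, but as written your intermediate step (``whenever the true agent state obeys its ACP bound, $\rho(b_t,X_t)=1$,'' combined with a claimed per-step probability $\ge 1-\delta$ of that event) asserts something strictly stronger than what the prediction machinery delivers. To repair it, replace the per-timestep marginal claim with the averaged inequality above, push the implication $\|X_{t+1}-\hat X_t^1\|\le C_{t+1}^1 \Rightarrow c(s_{t+1},X_{t+1})\ge 0$ through the law of total probability termwise, and only then pass to the limit. With that substitution your argument coincides with the paper's.
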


% \startpara{Proof of \thmref{thm:main}}

\begin{proof}
Let $a_t = \pi^*(b_t)$ denote the action selected by the policy $\pi^*$ at timestep $t$ computed via \agref{alg:shield}. 
By construction, $a_t$ is a safe action leading solely to successor belief supports within the winning region $W_t^1$, which, according to \lemref{lem:wr}, has an $(H-1)$-step winning horizon.
Thus, for each state $s_{t+1} \in \supp(b_{t+1})$, we have 
\begin{equation} \label{eqn:proof-1}
    c(s_{t+1}, \hat{X}_t^1) \ge L \cdot C_{t+1}^{1}.
\end{equation}
Based on \eqnref{eqn:proof-1} and thanks to Lipschitz continuity of function $c$, we derive that 
\begin{align*}
    0 &\leq  c(s_{t+1}, \hat{X}_t^1) - L \cdot C^1_{t+1} \\
    &\leq c(s_{t+1}, X_{t+1}) + L \cdot \|X_{t+1} - \hat{X}_t^{1}\| - L \cdot C^1_{t+1}.
\end{align*}
Hence, $\|X_{t+1} - \hat{X}_t^{1}\| \leq C^1_{t+1}$ is a sufficient condition for $c(s_{t+1}, X_{t+1}) \geq 0$, that is, $\Pr(c(s_{t+1}, X_{t+1}) \geq 0) \mid \| X_{t+1} - \hat{X}_t^1 \| \leq C_{t+1}^1) = 1.$
Thanks to the law of total probability, we have
\begin{equation} \label{eqn1}
    \Pr(c(s_{t+1}, X_{t+1}) \geq 0) \geq \Pr(\| X_{t+1} - \hat{X}_t^1 \| \leq C_{t+1}^1).
\end{equation} 
With the assistance of Corollary 3 in~\cite{dixit2023adaptive}, it can be shown that the ACP prediction regions $C_{t+1}^{1}$ computed as per \sectref{sec:acp} guarantee that
\begin{equation}\label{eqn2}
     \frac{1}{T}  \sum_{t=0}^{T-1} \Pr(\| X_{t+1} - \hat{X}_t^1 \|\le C_{t+1}^1) \geq 1 - \delta - p_1
\end{equation}
with constant $p_1:=\frac{\lambda^1_0 + \alpha}{T \cdot \alpha}$, where $\lambda^1_0$ is the constant initial value in \eqnref{eq:failureupdate} and $T$ is the number of times \eqnref{eq:failureupdate} is applied. 
Combining \eqneqnref{eqn1}{eqn2}, we have
\begin{equation}\label{eqn3}
     \frac{1}{T}  \sum_{t=0}^{T-1}  \Pr(c(s_{t+1}, X_{t+1}) \geq 0) \geq 1 - \delta - p_1.
\end{equation}
By definition of \eqnref{eqn:prob-safe}, we have $ \rho(b_{t+1}, X_{t+1}) = \sum_{s \in \supp(b_{t+1})} b_{t+1}(s) \cdot \indicator_{\{c(s, X_{t+1}) \ge 0\}}.$
Since \eqnref{eqn3} holds for all $s\in supp(b_{t+1})$, we have
\begin{equation*}
\begin{aligned}
\ \frac{1}{T}  \sum_{t=0}^{T-1} \rho(b_t, X_t)
= &\ \frac{1}{T}  \sum_{t=0}^{T-1} b_{t+1}(s) \cdot \Pr(c(s_{t+1}, X_{t+1}) \geq 0)\\
\ge &\ 1-\delta -p_1.
\end{aligned}
\end{equation*}
Since $\lim_{T \to \infty} p_1=0$, we have
\begin{equation*}
    \phi^{\pi}(b_0) = \mathbb{E}_{\pi} [\lim_{T\to \infty}\frac{1}{T}\sum_{t=0}^{T-1} \rho(b_t, X_t) \ | \ b_0, X_t \sim \D] \ge 1-\delta.
\end{equation*}

Moreover, $a_t = \pi^*(b_t)$ is selected as the best action that achieves the maximum value of $\vset(h_ta)$, which calculates the expected return from all simulations starting at $h_t$, among all safe actions enabled at timestep $t$. 
Thus, given a substantial number of simulations, the expected return $V^{\pi^*}(b_0)$ at the initial belief state $b_0$ is optimal.  

In conclusion, we have proved that the policy $\pi^*$ computed by \agref{alg:shield} achieves the maximal expected return $V^{\pi^*}(b_0)$ while ensuring safety, i.e., $\phi^{\pi^*}(b_0) \ge 1 - \delta$. 

\end{proof}

\end{document}